\documentclass[11pt]{article}

%

\usepackage{jmlr2e}

\usepackage{microtype}
\usepackage{graphicx}
\usepackage{subfigure}
\usepackage{booktabs} 
\usepackage{amssymb,amsmath,amsfonts}

\usepackage{enumerate}
\usepackage[shortlabels]{enumitem}
\usepackage{natbib}



\newcommand{\Div}{\textsc{D} }
\newcommand{\Xent}{\textsc{H}}

\newcommand{\cA}{\mathcal{A}}
\newcommand{\cB}{\mathcal{B}}
\newcommand{\cX}{\mathcal{X}}

\DeclareMathOperator*{\argmin}{arg\,min}

\DeclareMathOperator{\Prtxt}{Pr}


\newcommand{\ifn}{\mathbf{1}} 
\newcommand{\evp}[2]{\mathbb{E}_{#2} \left[#1\right]} 
\newcommand{\abs}[1]{\left| #1 \right|}

\newcommand{\prp}[2]{\Prtxt_{#2} \left(#1\right)}

\newcommand{\lrp}[1]{\left(#1\right)}




\firstpageno{1}

\begin{document}

\title{Sharp finite-sample concentration of independent variables}

\author{\name Akshay Balsubramani \email akshay7@gmail.com \\
       }


\maketitle

\begin{abstract}
We show an extension of Sanov's theorem on large deviations, controlling the tail probabilities of i.i.d. random variables with matching concentration and anti-concentration bounds. 
This result has a general scope, applies to samples of any size, and has a short information-theoretic proof using elementary techniques. 
\end{abstract}

\vspace{2em}
Independently and identically distributed (i.i.d.) data  are drawn from a distribution $P$. 
A central focus in statistics, machine learning, and probability is what can be gleaned about $P$ from a sample of these data -- how much data must be sampled for the empirical distribution of the data to concentrate near $P$?

\section{Setup}

We describe distributions $P$ and $Q$ over a measurable space $\cX$ using the quantities of information theory. 
The entropy of a distribution $P$ is $\Xent (P) := \evp{\ln \frac{1}{P(x)}}{x \sim P}$. 
The relative entropy of $Q$ with respect to $P$ is $\Div (Q \mid\mid P) := \evp{\ln \frac{Q(x)}{P(x)}}{x \sim Q}$. 
The cross entropy of $P$ with respect to $Q$ is $\Xent (Q , P) := \evp{\ln \frac{1}{P(x)}}{x \sim Q}$. 
The empirical measure of any sample $Z = (z_1, \dots, z_n) \in \cX^{n}$ is $\hat{\mu}_{Z}$, a distribution over $\cX$. \footnote{It assigns probability to any $E \subseteq \cX$ of $\hat{\mu}_Z (E) := \frac{1}{n} \sum_{i=1}^{n} \ifn ( z_i \in E )$.} 
In our case, $Z \sim P^n$ is often an i.i.d. data sample -- a random variable; as shorthand, the resulting (random) empirical measure over $\cX$ is written $\hat{P}_n$.

Concentration of the empirical distribution $\hat{P}_n$ to $P$ is addressed by Sanov's theorem \citep{Sanov57}, when we are looking to measure if $\hat{P}_n$ falls in a set $\cA$ of distributions over $\cX$. 
Sanov's theorem bounds the probability of any such $\cA$ if $\cA$ is ``nice" (convex), in the limit of infinite samples. 

\begin{theorem}[Sanov's Theorem \citep{CT06}]
\label{thm:asymp-sanov}
Fix a distribution $P$ over $\cX$ and a convex set $\cA$ of such distributions. Then
$$
\lim_{n \to \infty} \; \frac{1}{n} \ln \prp{ \hat{P}_n \in \cA}{} 
= - \min_{Q \in \cA} \Div (Q \mid\mid P)
= - \Div (P_{\cA}^{*} \mid\mid P)
$$
where $P_{\cA}^{*} \in \argmin_{Q \in \cA} \Div (Q \mid\mid P)$ is called the I-projection of $P$ on $\cA$.
\end{theorem}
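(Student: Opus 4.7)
The plan is to prove this by the method of types, the standard combinatorial device for handling i.i.d. empirical measures. For clarity I will first sketch the argument when $\cX$ is finite; the extension to a general measurable space proceeds by applying the finite case to the distributions induced on an arbitrary finite measurable partition of $\cX$ and then letting the partition refine, with data-processing ensuring the infima converge to the correct limit.

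First I would catalog the empirical distributions actually attainable by a sample of size $n$ -- the \emph{types} $\mathcal{T}_n$ -- and collect the standard facts: there are at most $(n+1)^{|\cX|}$ of them, every sequence $Z \in \cX^n$ with $\hat{\mu}_Z = Q$ has $P^n$-probability exactly $\exp \lrp{ -n \lrb{ \Xent(Q) + \Div(Q \mid\mid P) } }$, and the size of the type class $\{Z : \hat{\mu}_Z = Q\}$ lies between $(n+1)^{-|\cX|} \exp(n \Xent(Q))$ and $\exp(n \Xent(Q))$ by Stirling's estimate on the multinomial coefficient. Multiplying these yields, for every type $Q \in \mathcal{T}_n$,
$$
(n+1)^{-|\cX|} \exp\lrp{-n \Div(Q \mid\mid P)} \; \leq \; \prp{\hat{P}_n = Q}{} \; \leq \; \exp\lrp{-n \Div(Q \mid\mid P)}.
$$

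For the upper bound on $\prp{\hat{P}_n \in \cA}{}$, I would take a union bound over the at most $(n+1)^{|\cX|}$ types lying in $\cA$, giving a polynomial prefactor times $\exp\lrp{-n \min_{Q \in \cA \cap \mathcal{T}_n} \Div(Q \mid\mid P)}$. Taking $\frac{1}{n} \ln$ and $n \to \infty$ absorbs the prefactor and, since $\mathcal{T}_n$ is dense in the simplex, sends the minimum to $\Div(P^*_{\cA} \mid\mid P)$. For the matching lower bound, I would pick a single type $Q_n \in \cA \cap \mathcal{T}_n$ with $Q_n \to P^*_{\cA}$ and trivially lower-bound $\prp{\hat{P}_n \in \cA}{}$ by $\prp{\hat{P}_n = Q_n}{} \geq (n+1)^{-|\cX|} \exp\lrp{-n \Div(Q_n \mid\mid P)}$, yielding $\liminf \frac{1}{n} \ln \prp{\hat{P}_n \in \cA}{} \geq -\Div(P^*_{\cA} \mid\mid P)$.

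The main obstacle is the lower bound's constructive step: exhibiting a rational-coordinate type $Q_n \in \cA$ that approaches $P^*_{\cA}$ with $\Div(Q_n \mid\mid P) \to \Div(P^*_{\cA} \mid\mid P)$. This is exactly where convexity of $\cA$ enters -- convex combinations of $P^*_{\cA}$ with nearby types in the simplex remain in $\cA$, so rounded rational approximations along directions within $\cA$ furnish such a sequence, with convergence of relative entropy following from its continuity on the relative interior and a boundary-truncation argument otherwise. Once both bounds match in the finite-alphabet case, the partition-refinement reduction completes the general statement.
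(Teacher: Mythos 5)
The paper does not prove this statement at all: Theorem \ref{thm:asymp-sanov} is quoted as background and attributed to \citet{CT06}, so there is no in-paper proof to compare against. Your proposal is essentially the textbook method-of-types argument from that reference. For finite $\cX$ the core of it is sound: the type-counting bound $(n+1)^{|\cX|}$, the exact sequence probability $\exp(-n[\Xent(Q)+\Div(Q\mid\mid P)])$, the two-sided type-class cardinality estimate, and the union bound for the upper bound are all correct and standard. It is also worth noting that the paper's own machinery (Lemma \ref{lem:eqsanov} and Theorem \ref{thm:sanov}) offers a genuinely different, non-combinatorial route to the same asymptotic statement --- the exact identity $\frac{1}{n}\ln\prp{\hat{P}_n\in\cA}{} = -\Div(\omega_{\cA}\mid\mid P) - \frac{1}{n}\Div(\mu_{\cA}\mid\mid\omega_{\cA}^n)$ reduces the problem to showing the total-correlation term vanishes and $\omega_{\cA}\to P_{\cA}^{*}$ --- which avoids counting arguments entirely and is why it extends to general $\cX$ and finite $n$.

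Two steps of your proposal deserve scrutiny. First, the lower bound: convexity of $\cA$ alone does not guarantee a sequence of types $Q_n\in\cA$ with $\Div(Q_n\mid\mid P)\to\Div(P_{\cA}^{*}\mid\mid P)$. A singleton $\cA=\{Q_0\}$ with irrational coordinates is convex, yet contains no type for any $n$, so $\prp{\hat{P}_n\in\cA}{}=0$ identically and the limit is $-\infty$. The standard fix is to assume $\cA$ is the closure of its interior (as Cover--Thomas do) or to state the lower bound with the infimum over the interior of $\cA$; the paper's statement shares this imprecision, but your proof should make the needed hypothesis explicit rather than attribute the step to convexity. Second, the one-sentence reduction from general measurable $\cX$ to finite partitions is a real gap, not a routine limit. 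The event $\{\hat{\mu}_Z\in\cA\}$ does not factor through a partition $\pi$ (membership of $\hat{\mu}_Z$ in $\cA$ is not determined by the pushforward $\hat{\mu}_Z^{\pi}$), so you cannot directly apply the finite-alphabet bound to ``the distributions induced on a partition''; and even after setting up the correct projective system, identifying $\sup_{\pi}\inf_{Q\in\cA}\Div(Q^{\pi}\mid\mid P^{\pi})$ with $\inf_{Q\in\cA}\Div(Q\mid\mid P)$ is a nontrivial minimax interchange. This is precisely the step the paper flags as requiring ``vastly more advanced techniques'' (\cite{DZ93}, Ex.\ 6.2.19), so it cannot be dispatched in a sentence.
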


The right-hand side is typically a finite negative quantity, so this precisely describes the rate of exponential decay of ``tail" events which decay exponentially fast in $n$. 
For finite $n$, this is not directly useful, but a variety of concentration bounds (e.g. Chernoff-type bounds) can be derived by approximating the decay for extreme-enough tail events and finite $n$. 
These typically apply to special subcases of the setting of Theorem \ref{thm:asymp-sanov} only, and can be loose.

\paragraph*{Linear families. }
For any real-valued functions $f_1, f_2, ..., f_k$ on $\cX$ and $\alpha_1, \alpha_2, \dots, \alpha_k \in \mathbb{R}$, the set
$
\{
P : \evp{f_i (a)}{a \sim P} = \alpha_i \; , \; \forall i \in [k]
\} 
$
is called a linear family of distributions.
For any $Q \in \cA$ if $\cA$ is convex, 
\begin{align}
\label{eq:pythagsubset}
\Div (Q \mid\mid P ) \geq \Div (Q \mid\mid P_{\cA}^{*} ) + \Div (P_{\cA}^{*} \mid\mid P )
\end{align}
with equality if $\cA$ is a linear family \citep{csiszar2004information}. 
This is the Pythagorean (in)equality for I-projections.

\section{Main results}

\subsection{A tight characterization with the I-projection}

We prove a tight general-purpose information-theoretic characterization of the probability of $\cA$ under the empirical measure $\hat{P}_{n}$. 
This is stated in terms of the conditional distribution $\displaystyle \mu_{\cA} (y) := \prp{Z = y \mid \hat{\mu}_{Z} \in \cA}{Z \sim P^{n}}$, for $y \in \cX^n$. 

\begin{theorem}
\label{thm:sanovsubset}
Fix a distribution $P$ over $\cX$ and a convex set $\cA$ of such distributions. 
For any $\cB \subseteq \cA$, 
\begin{align}
\label{eq:sanovsubsetexact}
\frac{1}{n} \ln \prp{ \hat{P}_n \in \cB}{} \leq - \Div (P_{\cA}^{*} \mid\mid P ) - \frac{1}{n} \Div ( \mu_{\cB} \mid\mid P_{\cA}^{*n} )
\end{align}
If $\cA$ is a linear family, equality holds. 
\end{theorem}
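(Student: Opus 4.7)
The strategy is to expand $\Div(\mu_{\cB} \mid\mid P_{\cA}^{*n})$ directly, peel off the target probability $p := \prp{\hat{P}_n \in \cB}{}$ as an additive logarithmic term, and reduce the remainder to a form where the Pythagorean (in)equality \eqref{eq:pythagsubset} applies to a well-chosen distribution in $\cA$.

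First I would note that $\mu_{\cB}$ is just the renormalization of $P^n$ to the set $\{y \in \cX^n : \hat{\mu}_y \in \cB\}$, so $\mu_{\cB}(y) = P^n(y)/p$ on this support. Plugging into the definition gives
\begin{align*}
\Div(\mu_{\cB} \mid\mid P_{\cA}^{*n}) = -\ln p + \evp{\ln \frac{P^n(y)}{P_{\cA}^{*n}(y)}}{y \sim \mu_{\cB}}.
\end{align*}

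Next I would exploit the product structure: since $\ln \frac{P^n(y)}{P_{\cA}^{*n}(y)} = \sum_{i=1}^n \ln \frac{P(y_i)}{P_{\cA}^*(y_i)} = n \evp{\ln \frac{P(x)}{P_{\cA}^*(x)}}{x \sim \hat{\mu}_y}$, taking expectation over $y \sim \mu_{\cB}$ collapses this to $n \evp{\ln \frac{P(x)}{P_{\cA}^*(x)}}{x \sim \bar{Q}}$, where $\bar{Q} := \evp{\hat{\mu}_y}{y \sim \mu_{\cB}}$ is the average empirical measure. The key structural observation is that $\bar{Q}$ is a convex combination of empirical measures each lying in $\cB \subseteq \cA$, so by convexity $\bar{Q} \in \cA$. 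Expanding the log-ratio as $\Div(\bar{Q} \mid\mid P_{\cA}^*) - \Div(\bar{Q} \mid\mid P)$ and combining,
\begin{align*}
\frac{1}{n} \ln p \;=\; -\frac{1}{n}\Div(\mu_{\cB} \mid\mid P_{\cA}^{*n}) + \Div(\bar{Q} \mid\mid P_{\cA}^*) - \Div(\bar{Q} \mid\mid P).
\end{align*}

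Finally I would apply \eqref{eq:pythagsubset} to $\bar{Q} \in \cA$: $\Div(\bar{Q} \mid\mid P) \geq \Div(\bar{Q} \mid\mid P_{\cA}^*) + \Div(P_{\cA}^* \mid\mid P)$, which upon rearrangement gives precisely \eqref{eq:sanovsubsetexact}. When $\cA$ is a linear family the Pythagorean relation holds with equality, so the whole chain becomes an equality.

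The main obstacles are really just bookkeeping: one must justify that $\mu_{\cB}$ is supported exactly where $\hat{\mu}_y \in \cB$ (so the normalization by $p$ is legitimate), and then recognize that the mixture $\bar{Q}$ is in $\cA$ so that the Pythagorean step is available. Once $\bar{Q} \in \cA$ is in hand, the rest is a direct manipulation of information-theoretic identities.
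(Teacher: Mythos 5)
Your proof is correct and follows essentially the same route as the paper: your averaged empirical measure $\bar{Q}$ is exactly the coordinate marginal $\omega_{\cB}$ of $\mu_{\cB}$, and your direct expansion of $\Div(\mu_{\cB} \mid\mid P_{\cA}^{*n})$ reproduces the paper's combination of Lemma \ref{lem:eqsanov} and Lemma \ref{lem:core-it-identity}, followed by the same Pythagorean step applied to a mixture point of $\cA$. No gaps.
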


It is instructive to consider $\cB = \cA$. 
The upper bound of Theorem \ref{thm:sanovsubset}  
resembles the asymptotic Sanov bound $- \Div (P_{\cA}^{*} \mid\mid P)$ of Theorem \ref{thm:asymp-sanov}. 
However, it is strengthened by a term $\frac{1}{n} \Div ( \mu_{\cA} \mid\mid P_{\cA}^{*n} )$, which vanishes as $n \to \infty$, by Sanov's theorem.

\subsection{A more general extension}

Instead of using I-projections, we can state a slightly more general version of Theorem \ref{thm:sanovsubset}. 
Define $\mu_{\cA}$'s marginal $\omega_{\cA}(u) := \prp{Z_i = u \mid \hat{\mu}_{Z} \in \cA}{Z \sim P^{n}}$ for $u \in \cX$, along any coordinate $i = 1, \dots, n$.\footnote{These are all the same, by exchangeability.}

\begin{theorem}
\label{thm:sanov}
Fix any distribution $P$ over $\cX$ and a set $\cA$ of such distributions. Then: 
\begin{align}
\label{eq:sanovexact}
\frac{1}{n} \ln \prp{ \hat{P}_n \in \cA}{} = - \Div (\omega_{\cA} \mid\mid P ) - \frac{1}{n} \Div ( \mu_{\cA} \mid\mid \omega_{\cA}^n )
\end{align}
Also, 
\begin{enumerate}[1)]
    \item
    $
    \frac{1}{n} \ln \prp{ \hat{P}_{n} \in \cA}{} 
    \stackrel{(a)}{\leq}
    - \Div (\omega_{\cA} \mid\mid P ) 
    \stackrel{(b)}{\leq} 
    - \Div (P_{\cA}^{*} \mid\mid P)
    $
    \vspace{-0.5em}
    \item
    $\displaystyle
    \frac{1}{n} \ln \prp{ \hat{P}_{n} \in \cA}{}
    \stackrel{(a)}{\geq}
    - \Xent (\omega_{\cA} , P ) 
    \stackrel{(b)}{\geq} 
    - \max_{Q \in \cA} \Xent (Q , P )
    $
\end{enumerate}
The inequalities $(1b)$ and $(2b)$ require $\cA$ to be a convex set.
\end{theorem}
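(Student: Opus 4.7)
My plan is to establish the equality \eqref{eq:sanovexact} first by a direct manipulation of the definition of $\mu_{\cA}$, and then obtain each of the four one-sided bounds as a short corollary. For the equality, I would begin from the observation that whenever $y \in \cX^n$ satisfies $\hat{\mu}_y \in \cA$, Bayes' rule gives $\mu_{\cA}(y) = P^n(y) / \prp{\hat{P}_n \in \cA}{}$, so
\[
\ln \prp{\hat{P}_n \in \cA}{} \;=\; \ln P^n(y) \;-\; \ln \mu_{\cA}(y).
\]
The right-hand side does not depend on $y$ (on the support of $\mu_{\cA}$), so averaging both sides over $y \sim \mu_{\cA}$ preserves the identity. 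The first expectation factors as $\sum_{i=1}^n \evp{\ln P(y_i)}{y \sim \mu_{\cA}} = -n \, \Xent(\omega_{\cA}, P)$, using that every coordinate marginal of $\mu_{\cA}$ equals $\omega_{\cA}$ by exchangeability, while the second becomes $-\Xent(\mu_{\cA})$. Rewriting $\Xent(\omega_{\cA}, P) = \Xent(\omega_{\cA}) + \Div(\omega_{\cA} \mid\mid P)$ and $\Xent(\mu_{\cA}) = n\,\Xent(\omega_{\cA}) - \Div(\mu_{\cA} \mid\mid \omega_{\cA}^n)$ (both direct expansions, the second of which again uses exchangeability) and dividing through by $n$ then yields \eqref{eq:sanovexact}.

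The four one-sided bounds follow as short corollaries. Inequality $(1a)$ is immediate from \eqref{eq:sanovexact} and $\Div(\mu_{\cA} \mid\mid \omega_{\cA}^n) \geq 0$, and inequality $(2a)$ is immediate from the intermediate identity $\frac{1}{n}\ln\prp{\hat{P}_n \in \cA}{} = -\Xent(\omega_{\cA}, P) + \frac{1}{n}\Xent(\mu_{\cA})$ together with $\Xent(\mu_{\cA}) \geq 0$. For $(1b)$ and $(2b)$, the crucial observation is
\[
\omega_{\cA}(u) \;=\; \frac{1}{n}\sum_{i=1}^n \prp{Z_i = u}{\hat{\mu}_Z \in \cA} \;=\; \evp{\hat{\mu}_Z(u)}{\hat{\mu}_Z \in \cA},
\]
so $\omega_{\cA}$ is a convex combination, under the conditional law, of empirical distributions that all lie in $\cA$. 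Convexity of $\cA$ therefore forces $\omega_{\cA} \in \cA$; granted this, $(1b)$ is nothing more than the defining minimality of the I-projection $P_{\cA}^{*}$, and $(2b)$ is the trivial bound of one element of $\cA$ by the maximum of $\Xent(Q,P)$ over $\cA$.

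The only real work is the bookkeeping for the equality in the first step; once that is in hand, the inequalities reduce to nonnegativity of $\Div$ or $\Xent$, or to the one-line verification $\omega_{\cA} \in \cA$. A minor subtlety in $(2a)$ is the use of $\Xent(\mu_{\cA}) \geq 0$, which is automatic when $\cX$ is discrete (the tacit setting of the paper's formalism) but need not hold for a general continuous $\cX$.
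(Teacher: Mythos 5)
Your proposal is correct and follows essentially the same route as the paper: the equality comes from the Bayes identity $\mu_{\cA}(y) = P^n(y)/\prp{\hat{P}_n \in \cA}{}$ on the support of $\mu_{\cA}$ plus tensorization through the common marginal $\omega_{\cA}$, $(1a)$ and $(2a)$ follow from nonnegativity of $\Div(\mu_{\cA} \mid\mid \omega_{\cA}^n)$ and $\Xent(\mu_{\cA})$ respectively, and $(1b)$, $(2b)$ from the mixture representation $\omega_{\cA} = \evp{\hat{\mu}_Z(\cdot)}{\hat{\mu}_Z \in \cA}$ placing $\omega_{\cA}$ in the convex set $\cA$ --- exactly the paper's footnote argument. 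Your closing caveat that $(2a)$ relies on $\Xent(\mu_{\cA}) \geq 0$, which holds in the discrete setting but not for differential entropy, is a legitimate point that the paper leaves implicit.
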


The upper bound 
$- \Div (\omega_{\cA} \mid\mid P ) = - \Xent (\omega_{\cA} , P ) + \Xent (\omega_{\cA})$ 
and lower bound $- \Xent (\omega_{\cA} , P ) $ 
(part (2) of the theorem) can be compared:
\begin{align}
\abs{ \frac{\mbox{upper bound}}{\mbox{lower bound}} - 1 }
&= \abs{ \frac{- \Xent (\omega_{\cA} , P ) + \Xent (\omega_{\cA})}{- \Xent (\omega_{\cA} , P )} - 1 }
= \frac{\Xent (\omega_{\cA})}{\Xent (\omega_{\cA} , P )}
\end{align}
In situations where $\Xent (\omega_{\cA}) \ll \Xent (\omega_{\cA} , P )$ ($\cA$ represents a very atypical event under $P$, so that conditioning on $\cA$ significantly changes $P$), 
the bounds tend to match well.

For finite $n$, the slack in the upper bound is $- \frac{1}{n} \Div ( \mu_{\cA} \mid\mid \omega_{\cA}^n )$. 
This quantity $\Div ( \mu_{\cA} \mid\mid \omega_{\cA}^n )$ is called the \emph{total correlation} between the $n$ marginal distributions of $\mu_{\cA}$. 
The total correlation, a multivariate generalization of the mutual information \citep{watanabe1960information}, is the information gained by knowing the joint distribution instead of knowing just the marginals. 
It decreases as $n$ increases here because the marginals are nearly independent, to become negligible as $n \to \infty$ in Sanov's theorem.


\section{Discussion}

The proof techniques here are from Csisz{\'a}r \citep{csiszar1984sanov}, underpinned by an information-theoretic identity (\cite{csiszar1984sanov}, Eq. 2.11). 
This expresses large deviation probabilities of an $n$-sample random variable in terms of the information geometry of its conditional distribution. 

\begin{lemma}
\label{lem:eqsanov}
Fix a distribution $P$ over $\cX$ and a set $\cA$ of distributions over $\cX$. 
Then 
\begin{align*}
- \ln \prp{\hat{P}_n \in \cA}{} 
&= \Div ( \mu_{\cA} \mid\mid P^{n} ) 
= n \Div (\omega_{\cA} \mid\mid P ) + \Div ( \mu_{\cA} \mid\mid \omega_{\cA}^n )
\end{align*}
\end{lemma}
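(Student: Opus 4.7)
The plan is to prove the two equalities in sequence, both by direct manipulation of the definition of relative entropy.

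For the first equality, $-\ln \prp{\hat{P}_n \in \cA}{} = \Div(\mu_{\cA} \mid\mid P^n)$, I would start from the explicit form of the conditional distribution: for $y \in \cX^n$,
\[
\mu_{\cA}(y) = \frac{P^n(y)\, \ifn[\hat{\mu}_y \in \cA]}{\prp{\hat{P}_n \in \cA}{}}.
\]
Plugging this into $\Div(\mu_{\cA} \mid\mid P^n) = \sum_y \mu_{\cA}(y) \ln \frac{\mu_{\cA}(y)}{P^n(y)}$, the ratio $\mu_{\cA}(y)/P^n(y)$ equals $1/\prp{\hat{P}_n \in \cA}{}$ on the support of $\mu_{\cA}$. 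Since $\mu_{\cA}$ is a probability distribution, pulling the constant $-\ln \prp{\hat{P}_n \in \cA}{}$ outside the sum yields the claim.

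For the second equality, $\Div(\mu_{\cA} \mid\mid P^n) = n \Div(\omega_{\cA} \mid\mid P) + \Div(\mu_{\cA} \mid\mid \omega_{\cA}^n)$, I would insert $\omega_{\cA}^n$ into the log ratio and split:
\[
\sum_y \mu_{\cA}(y) \ln \frac{\mu_{\cA}(y)}{P^n(y)} = \sum_y \mu_{\cA}(y) \ln \frac{\mu_{\cA}(y)}{\omega_{\cA}^n(y)} + \sum_y \mu_{\cA}(y) \ln \frac{\omega_{\cA}^n(y)}{P^n(y)}.
\]
The first term is $\Div(\mu_{\cA} \mid\mid \omega_{\cA}^n)$ by definition. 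For the second term, both $\omega_{\cA}^n$ and $P^n$ are product measures, so the log ratio decomposes as $\sum_{i=1}^n \ln \frac{\omega_{\cA}(y_i)}{P(y_i)}$. Exchanging the sums and integrating out all coordinates except $i$ reduces each summand to an expectation under the $i$-th marginal of $\mu_{\cA}$, which is $\omega_{\cA}$ by exchangeability of $P^n$ and symmetry of the event $\{\hat{\mu}_Z \in \cA\}$. Each of the $n$ terms therefore contributes $\Div(\omega_{\cA} \mid\mid P)$.

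This proof is essentially a bookkeeping exercise. The only step requiring care is the appeal to exchangeability — justifying that every coordinate marginal of $\mu_{\cA}$ coincides with the common distribution $\omega_{\cA}$. This follows because $P^n$ is invariant under permutations of coordinates and the conditioning event $\{\hat{\mu}_Z \in \cA\}$ depends on $Z$ only through its empirical measure, which is itself permutation-invariant; hence $\mu_{\cA}$ is exchangeable and all its one-coordinate marginals are equal. No convexity or regularity assumption on $\cA$ is needed, which is consistent with the generality of the lemma.
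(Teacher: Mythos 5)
Your proof is correct and follows essentially the same route as the paper's: the first equality by observing that the likelihood ratio $\mu_{\cA}(y)/P^n(y)$ is constant on the support of $\mu_{\cA}$, and the second by the same insert-and-split decomposition that the paper packages as Lemma \ref{lem:core-it-identity} (stated there for a general reference measure $Q$ so it can be reused with $Q = P_{\cA}^{*}$). Your explicit justification of the exchangeability of $\mu_{\cA}$ spells out a point the paper relegates to a footnote.
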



Sanov's theorem has notably been bounded in a non-asymptotic form in different situations, including for finite discrete spaces $\cX$ \citep{CT06} and using vastly more advanced techniques (\cite{DZ93}, Ex. 6.2.19).
The distinctive additional features of this manuscript's result are the elementary proof and general scope. 
Extensions and applications are left to future work.


\section{Proofs}


\begin{proof}[Proof of Theorem \ref{thm:sanovsubset}]
To prove \eqref{eq:sanovsubsetexact}, 
\begin{align*}
\frac{1}{n} \ln \prp{\hat{P}_n \in \cB}{} 
&\stackrel{(a)}{=} - \Div (\omega_{\cB} \mid\mid P ) - \frac{1}{n} \Div ( \mu_{\cB} \mid\mid \omega_{\cB}^n ) \\
&\stackrel{(b)}{\leq} - \Div (P_{\cA}^{*} \mid\mid P ) - \Div (\omega_{\cB} \mid\mid P_{\cA}^{*} ) - \frac{1}{n} \Div ( \mu_{\cB} \mid\mid \omega_{\cB}^n ) \\
&\stackrel{(c)}{=} - \Div (P_{\cA}^{*} \mid\mid P ) - \frac{1}{n} \Div ( \mu_{\cB} \mid\mid P_{\cA}^{*n} )
\end{align*}
where (a) invokes Lemma \ref{lem:eqsanov} with the set $\cB$, (b) uses the Pythagorean inequality for I-projections \eqref{eq:pythagsubset} (since $\omega_{\cB} \in \cB \subseteq \cA$), and (c) uses Lemma \ref{lem:core-it-identity}. 
\end{proof}

\begin{proof}[Proof of Lemma \ref{lem:eqsanov}]
First, note that for all $y \in \cX^n$, 
by definition $\mu_{\cA} (y) = \frac{P^{n} (y)}{ \prp{ \hat{P}_{n} \in \cA}{}} \ifn (\hat{\mu}_{y} \in \cA)$. 
Therefore, 
\begin{align}
\Div ( \mu_{\cA} \mid\mid P^{n} ) 
&= \evp{ \ln \frac{\mu_{\cA} (Z)}{P^{n} (Z)} }{Z \sim \mu_{\cA}} 
= \frac{ \evp{ \ln \frac{\mu_{\cA} (Z)}{P^{n} (Z)} \ifn( \hat{\mu}_{Z} \in \cA) }{Z \sim P^{n}} }{\prp{ \hat{P}_{n} \in \cA}{}} \nonumber \\
&= \frac{ \evp{ - \ln \prp{ \hat{P}_{n} \in \cA}{} \ifn( \hat{\mu}_{Z} \in \cA) }{Z \sim P^{n}} }{\prp{ \hat{P}_{n} \in \cA}{}} 
= 
\label{eq:sanovbud1}
- \ln \prp{ \hat{P}_{n} \in \cA}{}
\end{align}
Calling Lemma \ref{lem:core-it-identity} (with $P, \mu_{\cA}, \omega_{\cA}$) shows that $\Div ( \mu_{\cA} \mid\mid P^{n} ) - \Div ( \mu_{\cA} \mid\mid \omega_{\cA}^n ) = n \Div ( \omega_{\cA} \mid\mid P )$. Combining this with \eqref{eq:sanovbud1} gives the result.
\end{proof}

\begin{lemma}
\label{lem:core-it-identity}
Fix a distribution $P$ over $\cX$ and a set $\cA$ of distributions over $\cX$, and define $\mu_{\cA}, \omega_{\cA}$ with respect to these. 
For any distribution $Q$ over $\cX$, 
\begin{align*}
\Div ( \mu_{\cA} \mid\mid Q^{n} ) - \Div ( \mu_{\cA} \mid\mid \omega_{\cA}^n ) = n \Div ( \omega_{\cA} \mid\mid Q )
\end{align*}
\end{lemma}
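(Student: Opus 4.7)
The plan is to prove the identity by a direct computation that exploits the product structure of $Q^n$ and $\omega_{\cA}^n$, combined with the fact that every coordinate marginal of $\mu_{\cA}$ equals $\omega_{\cA}$ (by exchangeability).

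First I would rewrite the left-hand side by inserting $\omega_{\cA}^n$ inside the logarithm:
\begin{align*}
\Div(\mu_{\cA} \mid\mid Q^n) - \Div(\mu_{\cA} \mid\mid \omega_{\cA}^n)
&= \evp{\ln \frac{\mu_{\cA}(Z)}{Q^n(Z)} - \ln \frac{\mu_{\cA}(Z)}{\omega_{\cA}^n(Z)}}{Z \sim \mu_{\cA}} \\
&= \evp{\ln \frac{\omega_{\cA}^n(Z)}{Q^n(Z)}}{Z \sim \mu_{\cA}}.
\end{align*}
This is the only analytic step; it isolates the discrepancy between the two KL divergences in a form that only depends on product measures.

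Next I would factor the remaining log-ratio coordinatewise. Since $\omega_{\cA}^n(z) = \prod_{i=1}^n \omega_{\cA}(z_i)$ and $Q^n(z) = \prod_{i=1}^n Q(z_i)$, the integrand splits as $\sum_{i=1}^n \ln \frac{\omega_{\cA}(z_i)}{Q(z_i)}$. Linearity of expectation then gives
\begin{align*}
\evp{\ln \frac{\omega_{\cA}^n(Z)}{Q^n(Z)}}{Z \sim \mu_{\cA}}
= \sum_{i=1}^n \evp{\ln \frac{\omega_{\cA}(Z_i)}{Q(Z_i)}}{Z \sim \mu_{\cA}}.
\end{align*}

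Finally I would observe that, by definition of $\omega_{\cA}$ as the marginal of $\mu_{\cA}$ on any coordinate, each expectation in the sum equals $\evp{\ln \frac{\omega_{\cA}(u)}{Q(u)}}{u \sim \omega_{\cA}} = \Div(\omega_{\cA} \mid\mid Q)$, and there are $n$ identical terms, giving the claimed $n \Div(\omega_{\cA} \mid\mid Q)$. There is no serious obstacle here; the only subtlety worth flagging is to justify the coordinate-marginal identification (via exchangeability of $\mu_{\cA}$, as noted in the footnote defining $\omega_{\cA}$), which makes the per-coordinate expectations well-defined and all equal.
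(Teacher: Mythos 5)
Your proposal is correct and follows essentially the same route as the paper's proof: cancel the $\ln \mu_{\cA}(Z)$ terms to reduce the difference of divergences to $\evp{\ln \frac{\omega_{\cA}^n(Z)}{Q^n(Z)}}{Z \sim \mu_{\cA}}$, split the log-ratio coordinatewise, and replace each per-coordinate expectation by an expectation under the marginal $\omega_{\cA}$. Your added remark about justifying the coordinate-marginal identification via exchangeability matches the paper's step $(a)$.
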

\begin{proof}[Proof of Lemma \ref{lem:core-it-identity}]
\begin{align}
\Div ( \mu_{\cA} \mid\mid Q^{n} ) &- \Div ( \mu_{\cA} \mid\mid \omega_{\cA}^n ) 
= \evp{ \ln \frac{\omega_{\cA}^n (Z)}{Q^{n} (Z)} }{Z \sim \mu_{\cA}} 
= \sum_{i=1}^{n} \evp{ \ln \frac{\omega_{\cA} (Z_i)}{Q (Z_i)} }{Z \sim \mu_{\cA}} \nonumber \\
&\stackrel{(a)}{=} \sum_{i=1}^{n} \evp{ \ln \frac{\omega_{\cA} (u)}{Q (u)} }{u \sim \omega_{\cA}} 
= \sum_{i=1}^{n} \Div ( \omega_{\cA} \mid\mid Q )
=
\label{eq:sanovbud2}
n \Div ( \omega_{\cA} \mid\mid Q )
\end{align}
where $(a)$ is because of the definition of the marginal distribution $\omega_{\cA}$. 
\end{proof}

\begin{proof}[Proof of Theorem \ref{thm:sanov}]
Lemma \ref{lem:eqsanov} is equivalent to Eq. \eqref{eq:sanovexact}. 

We get part (1a) by bounding Eq. \eqref{eq:sanovexact} using the expression $\Div ( \mu_{\cA} \mid\mid \omega_{\cA}^n ) \geq 0$. 
Using the convexity of $\cA$, and the fact that $\omega_{\cA} \in \cA$ (this can be proved\footnote{
For any set $F \subseteq \cX$, $\omega_{\cA}$ is a linear combination of distributions in $\cA$: 
\begin{align*}
\omega_{\cA} (F) &= \int_{Z \in \cX^n} \mu_{\cA} (Z) \ifn( Z_i \in F) 
= \int_{Z \in \cX^n} \mu_{\cA} (Z) \lrp{ \frac{1}{n} \sum_{i=1}^{n} \ifn( Z_i \in F) } \\
&= \int_{Z \in \cX^n} \mu_{\cA} (Z) \hat{\mu}_Z (F) 
= \int_{\hat{\mu}_Z \in \cA } \mu_{\cA} (Z) \hat{\mu}_Z (F)
\end{align*}
where the last equality is because $\{ Z : \hat{\mu}_Z \notin \cA \}$ are given measure zero by $\mu_{\cA}$. Since $\cA$ is convex, $\omega_{\cA} \in \cA$.
} from the definitions), 
and further applying $\Div (\omega_{\cA} \mid\mid P ) \geq \min_{Q \in \cA} \Div (Q \mid\mid P)$ gives part (1b).

For part (2a), note that 
\begin{align*}
\Div ( \mu_{\cA} \mid\mid \omega_{\cA}^n ) 
&= \Xent ( \mu_{\cA} , \omega_{\cA}^n ) - \Xent ( \mu_{\cA} ) 
= \evp{- \ln \lrp{ \prod_{i=1}^{n} \omega_{\cA} (Z_i) } }{Z \sim \mu_{\cA}} - \Xent ( \mu_{\cA} ) \\
&= \sum_{i=1}^{n} \evp{- \ln \lrp{ \omega_{\cA} (Z_i ) } }{Z \sim \mu_{\cA}} - \Xent ( \mu_{\cA} ) 
= \sum_{i=1}^{n} \evp{- \ln \lrp{ \omega_{\cA} (u) } }{u \sim \omega_{\cA}} - \Xent ( \mu_{\cA} ) \\
&= n \Xent ( \omega_{\cA} ) - \Xent ( \mu_{\cA} ) 
\end{align*}
Using this on Eq. \eqref{eq:sanovexact}, 
$\frac{1}{n} \ln \prp{ \hat{P}_{n} \in \cA}{} = - (\Div (\omega_{\cA} \mid\mid P ) + \Xent ( \omega_{\cA} ) ) + \frac{1}{n} \Xent ( \mu_{\cA} ) 
= - \Xent (\omega_{\cA} , P ) + \frac{1}{n} \Xent ( \mu_{\cA} )
\geq - \Xent (\omega_{\cA} , P )$. 
This proves part (2a). 
Using the convexity of $\cA$ and $\Xent (\omega_{\cA} , P ) \leq \max_{Q \in \cA} \Xent (Q , P )$ gives part (2b).

\end{proof}




\bibliography{sample}

\end{document}